\newcommand\defeq{\mathrel{\stackrel{\makebox[0pt]{\mbox{\normalfont\scriptsize def}}}{:=}}}
\newcommand{\tighttexttt}[1]{\scalebox{0.9}[1]{\texttt{#1}}}
\newcommand{\sig}[1]{{\mathrm{{#1}}}}
\newcommand{\CommentedText}[1]{}
\begin{document}
\title{Runtime Monitoring and Enforcement of Conditional Fairness in Generative AIs
}
\titlerunning{Conditional Fairness in Generative AIs}
%

\author{Chih-Hong Cheng\inst{1,2}\orcidID{0000-0002-7265-8413} \and
Changshun Wu\inst{3}\orcidID{0000-0001-8293-288} \and
Xingyu Zhao\inst{4}\orcidID{0000-0002-3474-349X} \and 
Saddek Bensalem\inst{5}\orcidID{0000-0002-5753-2126} \and
Harald Ruess\inst{6}\orcidID{0000-0002-1405-2990}
} 

\authorrunning{C.-H. Cheng et al.}
%
\institute{Chalmers University of Technology, Sweden \and
Carl von Ossietzky Universität Oldenburg, Germany \and
Universit\'e Grenoble Alpes, France \and
University of Warwick, United Kingdom \and
CSX-AI, France \and
SRI International, United States 
}
\maketitle              
\begin{abstract}

The deployment of generative AI (GenAI) models raises significant fairness concerns, addressed in this paper through novel characterization and enforcement techniques specific to GenAI. Unlike standard AI performing specific tasks, GenAI's broad functionality requires ``conditional fairness'' tailored to the context being generated, such as demographic fairness in generating images of poor people versus successful business leaders. We define two fairness levels: the first evaluates fairness in generated outputs, independent of prompts and models; the second assesses inherent fairness with neutral prompts. Given the complexity of GenAI and challenges in fairness specifications, we focus on bounding the worst case, considering a GenAI system unfair if the distance between appearances of a specific group exceeds preset thresholds. We also explore combinatorial testing for assessing relative completeness in intersectional fairness. By bounding the worst case, we develop a prompt injection scheme within an agent-based framework to enforce conditional fairness with minimal intervention, validated on state-of-the-art GenAI systems.

\keywords{generative AI  \and conditional fairness \and monitoring and enforcement.}
\end{abstract}

\section{Introduction}\label{sec:introduction}

Generative AI (GenAI) inherits and reinforces societal stereotypes and biases through the content it generates, and through the virtually open-ended ways in which it is used in everyday life~\cite{ferrara2023fairness,buyl2024inherent}\@. But established design-time techniques for improving the fairness of machine-learned models are not applicable to black-box GenAI systems. In addition, any notion of fairness for GenAI must be flexible in relation to its social context. We address these challenges by a new notion of relative fairness specifications for GenAI, which we call {\em conditional fairness}.

For the purposes of this paper, we consider GenAI to be a web service (similar to ChatGPT) that can be called sequentially by a potentially infinite number of clients.
The underlying interaction model between the GenAI model and its clients, therefore, is an infinite sequence of pairs of input prompts and corresponding GenAI-generated content. Now, given a set of sensitive concept groups such as {\em gender} or {\em demographics}, fairness specifications constrain the eventual or repeated appearance of sensitive concept groups. These temporal fairness specifications are clearly inspired by  \emph{linear temporal logic} (LTL)~\cite{pnueli1977temporal}
and its bounded \emph{metric interval temporal logic} (MITL)~\cite{alur1996benefits} variant\@. We also link this rather logic-centric view to the prevailing frequency-centric view of fairness. 

Due to the virtually open-ended ability of GenAI to generate content of different categories, we propose a notion of fairness that is \emph{conditional} on individual sensitive concepts such as ``economically disadvantaged people'' or ``successful business leaders''.  These conditions serve as an Archimedean point in the definition of relative notions of fairness for virtually open-ended GenAI. We further distinguish between the fairness that is manifested in the generated sequence and the \emph{inherent fairness} of the GenAI, where the latter term refers to conditions where the input prompt does not influence the GenAI to bias the output toward a particular concept value (e.g., a particular gender)\@. 

Considering fairness along single dimensions, such as gender or race in isolation, is clearly not sufficient to ensure fairness at their intersections, such as groups of dark-skinned women~\cite{crenshaw2013demarginalizing,buolamwini2018gender}\@. However, the main challenge of {\em intersectional fairness} lies in the impracticality of directly ensuring fairness across all combinations of subgroups, as this approach leads to an exponential proliferation of subgroups with each additional axis of discrimination~\cite{kearns2018preventing,buyl2024inherent,ruess2024fairness}\@. To address this {\em combinatorial explosion}, we develop a technique that is inspired by $k$-way combinatorial testing~\cite{nie2011survey} to provide a less stringent version of intersectional fairness whose evidence can be manifested by an image sequence of polynomially-bounded size.

To avoid worst-case scenarios where repeated image manifestations of the same concept group value exceed given fairness thresholds, we develop an agent-based \emph{fairness enforcement} algorithm that proactively monitors the generative AI model and prevents fairness violations over individual sensitive concepts. For each value in the group of sensitive features, the enforcement algorithm (as an agent) uses counters to track potential violations of deadlines as derived from the given fairness constraints. Whenever the GenAI comes close to violating such a fairness constraint, a special {\em prompt} is generated and injected into the user prompt to the neural model (as another agent), with the purpose of steering the GenAI away from a possible fairness violation.

\begin{sidewaysfigure}
  \centering
  \includegraphics[width=\textheight]{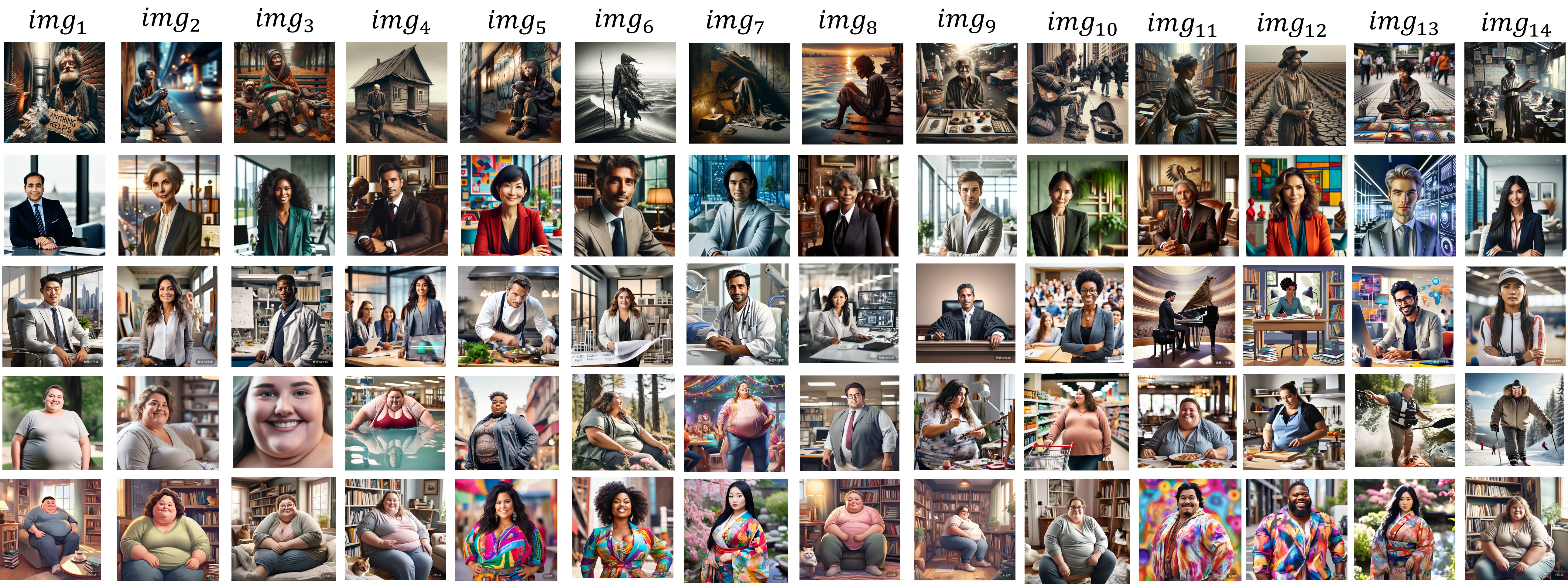}
  \caption{Image sequences of an economically disadvantaged person (1st row) and successful business leaders (2nd row) created by ChatGPT~$4.0$ connected with DALL$\cdot$E3,  image sequences of the successful person of any occupation (3rd row) and overweight person (4th row) created by the ZHIPU GLM-4 tool, as well as an image sequence of overweight people created by GLM-4 with demographic fairness enforcement using Algo.~\ref{algo:enforcing.fairness} (5th row).}
\label{fig:Evaluation}
\end{sidewaysfigure}

We have implemented a software prototype for evaluating fairness for GenAI systems, and
used it for some initial fairness assessments of  OpenAI's ChatGPT~$4.0$ connected with DALL$\cdot$E3, and of ZHIPU AI's GLM-4\@.\footnote{GLM-4 is available at \url{https://open.bigmodel.cn/}} Some qualitative results are highlighted in Fig.~\ref{fig:Evaluation}. These evaluations strongly support the view that the notion of fairness for generative AI is indeed conditional on concepts. For example, while fairness is strongly manifested when image sequences are conditioned to the idea of ``successful person'' (row 2 and 3 of Fig.~\ref{fig:Evaluation}), fairness against gender or demographic groups is weak when image sequences are conditioned to the idea of ``poor/economically disadvantaged person'' or ``overweight person'' (row 1 and 4 of Fig.~\ref{fig:Evaluation}), where for GLM-4 on generating images for overweight people, it is strongly biased towards white/Caucasus. We also observe that even for all-pair fairness by eventual appearance (the simplest form of approximate intersectional fairness), models such as GLM-4 fail to achieve a fairness coverage of~$35\%$, and thus have considerable room for improvement. Finally, we have also implemented fairness enforcement for GenAI, where as a qualitative example, for the same GLM-4 tool, the demographic fairness manifested in the image sequence for row~5 of Fig.~\ref{fig:Evaluation} is substantially superior to the non-enforced image sequence in row~4 (almost all with white/Caucasian looking) of Fig.~\ref{fig:Evaluation}.

The main contributions of this work are as follows:
\vspace{-1mm}
\begin{itemize}
    \item A novel notion of {\em conditional fairness} to capture fairness constraints of open-ended GenAI, 
    which can be flexibly conditioned on use-case specific sensitive features.
    \item Runtime monitoring and enforcement algorithm for assessing and guaranteeing conditional fairness constraints for GenAI.
    \item Experimental fairness evaluations demonstrating the flexibility and scalability of our approach to monitor and enforce conditional fairness constraints for some popular GenAI systems at runtime. A prototype implementation is available at \texttt{\url{https://github.com/SEMTA-Group/FairGenAI}} 
\end{itemize}

\section{Related Work}\label{sec:related.work}
In order to mitigate bias by learning fair models, a number of design phase techniques have been proposed to improve the fairness of machine learning models. These techniques are typically applied to the training data (pre-processing), the learning algorithm (in-processing), or the predictions (post-processing).
Main focus areas of research include dataset bias~\cite{torralba2011unbiased,tommasi2017deeper,li2019repair,fabbrizzi2022survey,he2021unlearn,sheppard2023subtle} as well as bias measurement and mitigation of models~\cite{limisiewicz2023debiasing,ranaldi2023trip,ungless2022robust,ernst2023bias,ramezani2023knowledge,huang2023bias,tao2023auditing}. 
In contrast, we monitor and enforce the fairness of GenAI at runtime, since GenAI usually is a black-box system (e.g., neither access to training data nor realistic possibility of retraining for fairness purposes)\@. Another key difference from mitigating unfairness in the design phase of uni-functional machine learning systems is that the fairness of multi-functional GenAI must be explicitly specified depending on the specific applications and their context.

Recent work on estimating fairness using runtime monitoring techniques is based on the assumption that the underlying system is a Markov chain~\cite{albarghouthi2019fairness,henzinger2023monitoring}\@. 
Nevertheless, these developments have focused exclusively on estimating average individual fairness, whereas in our approach, we consider the worst case of conditional fairness and go beyond merely monitoring it by incorporating proactive fairness enforcement. We argue that conditional fairness is essential for GenAI, a concept not addressed in~\cite{henzinger2023monitoring}. For example, we observe that gender fairness is only manifested in images of ``successful business leader'' but not in images of ``poor person''\@. Another relevant approach is Fairness Shields~\cite{cano2025fairness}, which enforces fairness over sequences of structured decisions via runtime monitoring and optimal control. Unlike decision models with discrete actions and fixed horizons, generative AI systems produce unstructured, open-ended outputs where conditional fairness must be ensured across a wide range of semantic contexts, resulting in a setting not addressed by prior runtime fairness methods.

Another relevant direction in the literature on fairness is intersectional fairness~\cite{kearns2018preventing,kiritchenko2018examining,tan2019assessing,kirk2021bias,gohar2023survey,ruess2024fairness}, which addresses biases that emerge at the intersection of multiple attributes such as gender, race, or ethnicity. Ensuring fairness across these intersections presents a significant challenge due to the exponential increase in subgroup combinations~\cite{kearns2018preventing}. Our approach addresses this by focusing on \textit{pair-wise intersectional conditional fairness}, inspired by $k$-way combinatorial testing~\cite{nie2011survey}.
We prioritize pairwise interactions here because our experiments suggest that existing GenAI models already face severe fairness problems at these rather coarse abstractions. However, this basic approach can be generalized to more refined approximations by also considering higher-order (i.e., ternary and beyond) interactions between concept groups.

\section{Formulation}
\label{sec:formulation}

For simplicity, we assume that the GenAI takes input prompts as a string and produces output as an image. However, the formulation can be easily adjusted to other types of modality (e.g., output as texts).

\begin{definition}[Stochastic Generator]
Let $\mathcal{G} : \Sigma^* \rightarrow \mathcal{D}(\mathcal{I})$ be the \emph{generator} that maps an input string (prompt) $p \in \Sigma^*$ to a probability distribution over images $\mathcal{I}$. For the distribution returned by $\mathcal{G}$ on input $p$, we use $\mathcal{G}^s(p)$ to denote an image sampled from this distribution.

\begin{itemize}
    \item $\Sigma$ is the set of possible tokens in a textual language.
    \item $\mathcal{I} \defeq \{0, \dots, 255\}^{H \times W \times 3}$, where $H, W \in \mathbb{N}_+$ are the height and width of an image, and $3$ represents that an image has three (RGB) channels. $\mathcal{D}(\mathcal{I})$ denotes the set of all probability distributions over the image space~$\mathcal{I}$. 
\end{itemize}

\end{definition}

Consider different users using $\mathcal{G}$ at different times by offering different input prompts $p_1, p_2, \dots$ respectively. One can view the generated result as an \emph{infinite image sequence} $\langle \mathit{img}_i \rangle \defeq \mathit{img}_1 \cdot \mathit{img}_2 \cdot \mathit{img}_3 \cdots \in \mathcal{I}^\omega$,
with $\mathit{img}_i = \mathcal{G}^s(p_i)$, ordered based on the request/result being produced.

\begin{example}[Generating images for a poor person] As a motivating example, we take  ChatGPT~$4.0$ with DALL$\cdot$E3 as the generator~$\mathcal{G}$ to synthesize images of economically disadvantaged people. 
Let $\Sigma$ be the set of tokens for ChatGPT. One user may provide an input prompt:
\[ 
p_1 \defeq \tighttexttt{``Please generate an image of a poor person''}. 
\]
Then one can use~$\mathcal{G}$ to generate an image $img_1 \defeq \mathcal{G}^{s}(p_i)$ as illustrated in the top left of Fig.~\ref{fig:Evaluation}. 

\end{example}

Subsequently, we define concept groups, which serve as the basis for characterizing the \emph{group fairness} of an infinite image sequence as well as the fairness of the generator. Throughout the paper, we use $[a \cdots b]$ to represent the set of integers ranging from~$a$ to~$b$.

\begin{definition}
    Let $\sig{cgf}_i: \mathcal{I} \rightarrow [0 \cdots \sig{CG}_i]$ be the \textbf{concept grouping function} that returns the index (concept group value) that an image belongs, where~$\sig{CG}_i$ is a non-negative integer, with value~$0$ being reserved for ``unrelated''. 
\end{definition}

\begin{example} Throughout the paper, we define three concept grouping functions and apply them in images within the top row of Fig.~\ref{fig:Evaluation}.

\begin{itemize}
    \item Let $\sig{cgf}_{poor}: \mathcal{I} \rightarrow \{0, 1, 2\}$ return a simplified categorization based on the character of an image is considered economically disadvantaged, with the value having the following semantics: $0$ for ``unrelated'' (i.e., not a person),  $1$ for ``no'', and $2$ for ``yes''.
Then $\sig{cgf}_{poor}(img_1) = \sig{cgf}_{poor}(img_2) = 2$.

\item Let $\sig{cgf}_{gender}: \mathcal{I} \rightarrow \{0, 1, 2\}$ return a simplified categorization based on the gender of an image character, with the value having the following semantics: $0$ for ``unrelated'' (i.e., not a person or unrecognizable),  $1$ for ``female'', and $2$ for ``male''. Then $\sig{cgf}_{gender}(img_1) = 2$. 

\item Let $\sig{cgf}_{age}: \mathcal{I} \rightarrow \{0, 1, 2, 3\}$ return a simplified categorization based on the seniority of the character within the image, with the value having the following semantics: 
$0$ for ``unrelated'' (i.e., not a person), $1$ for ``child'', $2$ for ``adult'', and  $3$ for ``elderly''.
Then $\sig{cgf}_{age}(img_1) = 3$.
\end{itemize}

\end{example}

Finally, we define the concept of \emph{removal}, which enables us to focus on a subsequence with elements sharing the same concept group values. 

\begin{definition}[Removal]
    Let $\sig{cgf}$ be a concept grouping function, and let $\langle img_i \rangle$ $\defeq img_1\cdot img_2 \cdots \in \mathcal{I}^{\omega}$ be the infinite sequence of images. Let $\sig{rm}(\langle img_i \rangle, \sig{cgf}, S)$ return a subsequence of $\langle img_i \rangle$ by removing every element $img_i$ from $\langle img_i \rangle$ where $\sig{cgf}(img_i) \in S$. 
    
\end{definition}

\begin{example}
    $\sig{rm}(\cdot, \sig{cgf}_{gender}, \{0\})$ removes all images from a sequence that do not contain a person or a person whose gender can not be recognized.  
\end{example}

\begin{example}
    $\sig{rm}(\cdot, \sig{cgf}_{poor}, [0\cdots2] \setminus \{2\})$ only keeps images of economically disadvantaged people (i.e., keeps only images with value~$\sig{cgf}_{poor}(\cdot)=2$). 
\end{example}

\section{The Different Facets of Fairness}
\label{sec:fairness}

\vspace{1mm}
\subsection{Sequence-level Fairness}
\label{sub.sec:fairness.sequence}

We establish the theoretical framework for defining conditional fairness on an infinite sequence of images. As stated earlier, we propose the following three types of specification to characterize the worst-case acceptable behavior, namely \emph{eventual appearance} (analogous to $\lozenge$ in LTL) of every concept group value, \emph{repeated appearance} ($\square\lozenge$), and repeated appearance with bounded distance ($\square\lozenge_{\leq \vec{\beta}}$).  

\begin{definition}[Sequence fairness with eventual appearance]\label{def:seq.fair.eventual}
    Let $\sig{cgf}_1$ and $\sig{cgf}_2$ be two concept grouping functions, and let~$\langle img_i \rangle \defeq img_1 \cdot img_2 \cdots \in \mathcal{I}^{\omega}$ be the infinite sequence of images. Then $\langle img_i \rangle$ is \textbf{fair with eventual appearance}  for concept group~$2$ \textbf{conditional to} $\sig{cgf}_1$ evaluated to~$cg$, abbreviated as $\langle\frac{\sig{cgf}_2}{\sig{cgf}_1 \Leftarrow  cg}\rangle$ $\lozenge$-fair, if given $\langle img'_i \rangle $ defined by Eq.~\eqref{eq:removal},

\vspace{-3mm}
    
\begin{align}\label{eq:removal}
        \langle img'_i \rangle  \defeq  \sig{rm}(\sig{rm}(\langle img_i \rangle, \sig{cgf}_{1}, [0 \dots CG_1]\setminus \{cg\}), 
   \sig{cgf}_{2}, \{0\})
\end{align}

\noindent then the following condition holds.

\vspace{-2mm}
\begin{equation}\label{eq:sequence.eventual}
\forall k \in [1 \dots CG_2]: \exists m \geq 1: \sig{cgf}_2(img'_m) = k
\end{equation}

\end{definition}

Essentially, Eq.~\eqref{eq:removal} considers a subsequence from $\langle img_i \rangle$ that is relevant when evaluated to $cg$ under $cgf_1(\cdot)$, and fairness needs to ensure that all related grouping values (apart from~$0$ being reserved for unrelated) evaluated under $cgf_2(\cdot)$ are covered.

\begin{example}
Let $\langle img_i \rangle \in \mathcal{I}^{\omega}$ be an infinite image sequence, where the first row of Fig.~\ref{fig:Evaluation} shows the first~$14$ images. Therefore, $\forall i \in \mathbb{N}: \sig{cgf}_{poor} (img_i) = 2$ (all for depicting different economically disadvantaged characters), implying that applying function $\sig{rm}(\cdot, \sig{cgf}_{poor}, \{0, 1\})$ does not remove any images. However, 
applying function $\sig{rm}(\cdot, \sig{cgf}_{gender}, \{0\})$ shall remove $img_6$ due to the gender of the character being unrecognizable (image showing only the back of the person). We can conclude that $\langle img_i \rangle$ is 
$\langle\frac{\sig{cgf}_{age}}{\sig{cgf}_{poor}\Leftarrow  1}\rangle$ $\lozenge$-fair, as we can find child, adult, and elderly images. Similarly, $\langle img_i \rangle$ is 
$\langle\frac{\sig{cgf}_{gender}}{\sig{cgf}_{poor}\Leftarrow  1}\rangle$ $\lozenge$-fair.

\end{example}

As the definition utilizes infinite image sequences, one can go beyond fairness with eventual appearance by considering the \emph{average}, \emph{minimum}, or \emph{maximum} distance of \emph{repeated occurrence} of every concept group value. For instance, the following definition characterizes fairness by ensuring that the minimum distance of repeated occurrence is bounded by~$\beta_i$ for every possible concept group value~$i$.

\begin{definition}[Sequence fairness by $\vec{\beta}$-bounded repeated appearance]\label{def:bounded.repeated.fair}
    Let $\sig{cgf}_1$ and $\sig{cgf}_2$ be two concept grouping functions, and let~$\langle img_i \rangle \defeq img_1 \cdot img_2 \cdots \in \mathcal{I}^{\omega}$ be an infinite sequence of images. Let $\vec{\beta} \defeq (\beta_1, \ldots, \beta_{CG_2})$ where $\forall j \in [1 \cdots \sig{CG}_2]: \beta_j \in \mathbb{N}$. Then $\langle img_i \rangle$ is \textbf{fair with $\vec{\beta}$-bounded repetition}  for concept group~$2$ \textbf{conditional to} $\sig{cgf}_1$ evaluated to~$cg$, abbreviated as $\langle\frac{\sig{cgf}_2}{\sig{cgf}_1 \Leftarrow  cg}\rangle$ $\square\lozenge_{\leq \vec{\beta}}$-fair, if given $\langle img'_i \rangle $ defined by Eq.~\eqref{eq:removal}, the following condition holds.

    
\begin{equation}\label{eq:bounded.repeated.fair}
\begin{split}
\forall k \in [1 \cdots \sig{CG}_2]: 
\exists m: 1 \leq m \leq \beta_k \wedge \sig{cgf}_2(img'_m) = k \\
\wedge \\
\forall m_1 \geq 1: (\sig{cgf}_2(img'_{m_1}) = k \rightarrow \\
\exists m_2:  m_1 < m_2 \leq m_1 + \beta_k:  \sig{cgf}_2(img'_{m_2}) = k)
\end{split}
\end{equation}

\end{definition}

In Def.~\ref{def:bounded.repeated.fair}, the distance estimation for repeated appearance is based on first removing unrelated images via $\sig{rm}(\cdot)$, following Eq.~\eqref{eq:removal}. The formulation is needed, as in the deployment of GenAI, users can provide different prompts and thus generate images (e.g., a motorcycle) that are unrelated to the condition where fairness shall be enforced (e.g., gender fairness for successful business leaders). Calculating the distance between two occurrences, as formulated in Eq.~\eqref{eq:bounded.repeated.fair}, shall not consider these unrelated images.

The following lemma considers a special case where all elements in $\vec{\beta}$ are equal and provides a sound upper bound on the \emph{frequency} difference under $\vec{\beta}$-bounded repeated appearance fairness. The extension to the general case is straightforward.

\begin{lemma}
Let $\sig{cgf}_2: \mathcal{I} \rightarrow [0 \cdots \sig{CG}_2]$ be the concept grouping function, and $\langle img_i \rangle \in \mathcal{I}^{\omega}$ be an infinite sequence of images. Given $cg \in [1 \cdots \sig{CG}_2]$, let $\langle img'_i \rangle$ be defined using Eq.~\eqref{eq:removal}, and define $\mathcal{F}(cg, \langle img'_i \rangle)$ using Eq.~\eqref{eq:limiting.frequency}. 
\begin{equation}\label{eq:limiting.frequency}
\mathcal{F}(cg, \langle img'_i \rangle) \defeq \lim_{n \to \infty} \frac{|\{m \leq n : \sig{cgf}_i(img'_m) = cg\}|}{n}
\end{equation}

Assume that $\langle img_i \rangle$ is $\langle\frac{\sig{cgf}_2}{\sig{cgf}_1 \Leftarrow  cg}\rangle$ $\square\lozenge_{\leq \vec{\beta}}$-fair where $\forall j \in [1 \cdots {CG}_2]: \beta_j = \beta$, then following condition holds.

\begin{equation}
\begin{split}
   \forall cg_x, cg_y \in [0 \cdots \sig{CG}_2]: \\|\mathcal{F}(cg_x, \langle img'_i \rangle) - \mathcal{F}(cg_y, \langle img'_i \rangle)| \leq 1 - \frac{\sig{CG}_2}{\beta}
\end{split}
\end{equation}

\end{lemma}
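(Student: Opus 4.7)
The plan is to convert the syntactic bounded-repetition condition of Def.~\ref{def:bounded.repeated.fair} into a uniform \emph{lower} bound on every per-value frequency, and then exploit the fact that all frequencies in the filtered sequence $\langle img'_i \rangle$ sum to one. These two facts together pin each $\mathcal{F}(k,\langle img'_i\rangle)$ into an interval of width exactly $1-\sig{CG}_2/\beta$, which is the claim.

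The first step is to show that for every $k \in [1\cdots \sig{CG}_2]$,
$\mathcal{F}(k,\langle img'_i\rangle) \geq 1/\beta$. I would unfold Eq.~\eqref{eq:bounded.repeated.fair} to argue that within any $\beta$ consecutive positions of $\langle img'_i\rangle$ the value $k$ must occur at least once: the first conjunct gives an occurrence of $k$ in positions $1,\dots,\beta$, and the second conjunct lets an inductive argument push this property forward, so consecutive $k$-occurrences are never more than $\beta$ apart. Hence $|\{m\le n : \sig{cgf}_2(img'_m)=k\}| \geq \lfloor n/\beta \rfloor$, which divided by $n$ and passed to the limit gives the $1/\beta$ lower bound.

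The second step is the complementary \emph{upper} bound. Because the second application of $\sig{rm}$ in Eq.~\eqref{eq:removal} strips out every position with $\sig{cgf}_2$-value $0$, we have $\sig{cgf}_2(img'_m) \in [1\cdots \sig{CG}_2]$ for all $m$, so the events $\{\sig{cgf}_2(img'_m)=k\}_{k=1}^{\sig{CG}_2}$ partition the positions and $\sum_{k=1}^{\sig{CG}_2} \mathcal{F}(k,\langle img'_i\rangle) = 1$. Combining with Step~1 yields $\mathcal{F}(k,\langle img'_i\rangle) = 1 - \sum_{j\ne k}\mathcal{F}(j,\langle img'_i\rangle) \leq 1-(\sig{CG}_2-1)/\beta$. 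Therefore for any $cg_x,cg_y \in [1\cdots \sig{CG}_2]$,
\begin{equation*}
\mathcal{F}(cg_x,\langle img'_i\rangle) - \mathcal{F}(cg_y,\langle img'_i\rangle)
\;\leq\; \Bigl(1 - \frac{\sig{CG}_2-1}{\beta}\Bigr) - \frac{1}{\beta}
\;=\; 1 - \frac{\sig{CG}_2}{\beta},
\end{equation*}
and the absolute-value version follows by swapping $cg_x$ and $cg_y$.

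The main obstacle is technical rather than conceptual: the limit in Eq.~\eqref{eq:limiting.frequency} need not exist under the hypothesis, because $\square\lozenge_{\leq\beta}$-fairness only constrains gaps, not long-run averages. I would either phrase the intermediate inequalities via $\liminf$ and $\limsup$ (Step~1 bounds $\liminf$ from below, Step~2 then bounds $\limsup$ from above, and the difference bound still holds) or read the hypothesis of the lemma as tacitly restricted to sequences for which the frequencies converge. A secondary wrinkle is that the lemma quantifies $cg_x,cg_y$ over $[0\cdots \sig{CG}_2]$, whereas $\mathcal{F}(0,\langle img'_i\rangle)=0$ by construction; this case is either handled by noting that the resulting single-sided bound $1-(\sig{CG}_2-1)/\beta$ is strictly tighter than $1-\sig{CG}_2/\beta$ only when all of the mass is concentrated on one value, or by simply restricting the quantifier to $[1\cdots \sig{CG}_2]$ in the final write-up.
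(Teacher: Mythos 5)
Your argument is essentially the paper's own proof, made more careful: the paper likewise derives the per-value lower bound $1/\beta$ from the gap condition, uses the fact that the frequencies over $[1\cdots\sig{CG}_2]$ sum to one to get the upper bound $1-(\sig{CG}_2-1)/\beta$, and subtracts to obtain $1-\sig{CG}_2/\beta$. The two caveats you raise are real but are also implicitly present in the paper, which hedges with ``if the limit exists'' and explicitly sets aside the value $0$ ``due to removal'' (so the quantifier should indeed be read over $[1\cdots\sig{CG}_2]$, since pairing a nonzero value with $cg_y=0$ would only give the weaker bound $1-(\sig{CG}_2-1)/\beta$).
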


\begin{proof} The extreme case occurs when the concept group value~$1$ has the highest frequency of occurrence (i.e., $cg_x = 1$) while the rest of concept group values $2, \ldots, \sig{CG}_2$ have the lowest occurrence frequency (we do not need to consider group value~$0$ due to removal). For each concept group value $cg_y \in [2 \cdots \sig{CG}_2]$ having the lowest frequency, it implies that in Eq.~\eqref{eq:bounded.repeated.fair}, $m_2 = m_1 + \beta$, i.e., $\mathcal{F}(cg_y, \langle img'_i \rangle)$ has the smallest value of $\frac{1}{\beta}$. Consequently, the frequency of $\mathcal{F}(cg_x, \langle img'_i \rangle)$ can at most be $1- (\sig{CG}_2 -1)\mathcal{F}(cg_y, \langle img'_i \rangle)  = 1 -\frac{\sig{CG}_2 - 1}{\beta}$. Therefore, the frequency difference (if the limit exists) is bounded by $(1 -\frac{\sig{CG}_2 -1}{\beta} ) - \frac{1}{\beta} = 1 - \frac{\sig{CG}_2}{\beta}$.
\end{proof}

\begin{lemma}\label{lemma:lowerbound}
Let $\langle img_i \rangle \in\mathcal{I}^{\omega}$ be $\langle\frac{\sig{cgf}_2}{\sig{cgf}_1 \Leftarrow  cg}\rangle$ $\square\lozenge_{\leq \vec{\beta}}$-fair where $\sig{cgf}_2: \mathcal{I} \rightarrow [0 \cdots \sig{CG}_2]$. If $\forall j \in [1 \cdots {CG}_2]: \beta_j = \beta$, then $\beta \geq \sig{CG}_2 
$.

\end{lemma}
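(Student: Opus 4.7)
The plan is to exploit the first conjunct of Eq.~\eqref{eq:bounded.repeated.fair}, which forces every concept group value to make an appearance within the initial window of length $\beta_k = \beta$ in the cleaned subsequence $\langle img'_i \rangle$. Since this must hold for each of the $\sig{CG}_2$ values simultaneously, a counting argument of pigeonhole flavor immediately gives the desired lower bound.

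More concretely, I would proceed as follows. First, instantiate the fairness condition for each $k \in [1 \cdots \sig{CG}_2]$: by hypothesis there exists an index $m_k$ with $1 \leq m_k \leq \beta$ such that $\sig{cgf}_2(img'_{m_k}) = k$. Next, observe that since $\sig{cgf}_2$ is a function, each position $m_k$ corresponds to exactly one value, and in particular the indices $m_1, m_2, \ldots, m_{\sig{CG}_2}$ are pairwise distinct (if $m_i = m_j$ then $i = \sig{cgf}_2(img'_{m_i}) = \sig{cgf}_2(img'_{m_j}) = j$). Thus $\{m_1, \ldots, m_{\sig{CG}_2}\}$ is a set of $\sig{CG}_2$ distinct positive integers, each of which is at most $\beta$, which immediately yields $\beta \geq \sig{CG}_2$.

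There is essentially no technical obstacle here; the only subtlety worth flagging in the write-up is why the initial-window conjunct of Eq.~\eqref{eq:bounded.repeated.fair} (rather than the repeated-appearance conjunct) suffices, and why the removal operation in Eq.~\eqref{eq:removal} does not interfere with the argument. The removal only deletes images irrelevant to the conditioning, so the resulting sequence $\langle img'_i \rangle$ is still well-defined and the quantifiers in Def.~\ref{def:bounded.repeated.fair} directly apply. I would keep the proof to a single short paragraph, since the argument is essentially the pigeonhole principle applied to the first $\beta$ entries of $\langle img'_i \rangle$.
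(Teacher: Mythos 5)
Your proof is correct, but it takes a different route from the paper's. The paper argues via the extremal configuration: it observes that the smallest admissible $\beta$ is attained when the concept group values appear in $\langle img'_i \rangle$ in strict round-robin order, in which case consecutive occurrences of the same value are exactly $\sig{CG}_2$ apart, so the repetition-distance conjunct of Eq.~\eqref{eq:bounded.repeated.fair} forces $\beta \geq \sig{CG}_2$. You instead exploit only the \emph{first} conjunct of Eq.~\eqref{eq:bounded.repeated.fair} (every value $k$ must appear at some position $m_k \leq \beta$ of the cleaned sequence) and apply pigeonhole: the positions $m_1, \ldots, m_{\sig{CG}_2}$ are pairwise distinct because $\sig{cgf}_2$ assigns a single value to each image, so $\beta$ must accommodate $\sig{CG}_2$ distinct indices. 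Your version is arguably the more rigorous of the two, since it derives the inequality directly from the quantifier structure of the definition rather than asserting which arrangement minimizes $\beta$; the paper's version, while informal, conveys the additional information that the bound is tight (round-robin achieves it). Either argument suffices for the stated lemma.
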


\begin{proof}
    The smallest possible $\beta_{min}$ occurs when concept group values occur in $\langle img'_i \rangle$ in a strictly round-robin fashion, implying that $\beta_{min} = \sig{size}([1 \cdots \sig{CG}_2]) = \sig{CG}_2 
$. 
\end{proof}

\subsection{Generator-level Fairness}
\label{sub.sec:fairness.generator}

Observe that an image sequence~$\langle img_i \rangle$ can be \emph{fair due to the explicit control of the input prompts} given (i.e., $p_1, p_2, \ldots$); it thus can not fully reflect the inherent limitation of the generator~$\mathcal{G}$. Therefore, we aim to consider the fairness of a generator, under the consideration that the input prompts $p_1, p_2, \ldots$ provide \emph{no hints} on the concept group in which fairness should be manifested. Before precisely characterizing the meaning of ``hints'', we offer some examples to assist in understanding the idea. 

\begin{example}\label{ex:neutral.prompt}
    
The first prompt provides no hint on the gender ($\sig{cgf}_{gender}$) and age ($\sig{cgf}_{age}$)  information, while the second prompt does. 
\begin{itemize}
    \item \tighttexttt{``Generate an image of a poor person.''}

    \item  \tighttexttt{``Generate an image of an economically disadvantaged young lady.''}

\end{itemize}

\end{example}

\begin{definition}[Biased prompts]\label{def:biased.prompts}
A prompt $p \in \Sigma^*$ is \emph{biased/non-neutral} subject to concept group $\mathrm{cgf}_i(\cdot)$, iff the following condition holds:
\[
\exists cg \in [1 \cdots \mathrm{CG}_i] \quad \forall \mathit{img} \in \mathrm{supp}(\mathcal{G}(p)) : \mathrm{cgf}_i(\mathit{img}) = cg
\]
where $\mathrm{supp}(\mathcal{G}(p)) \defeq \{ \mathit{img} \in \mathcal{I} \mid \mathbb{P}_{\mathcal{G}(p)}(\mathit{img}) > 0 \}$ is the support of the distribution $\mathcal{G}(p)$.
\end{definition}

Intuitively, the definition of a biased prompt implies that by using prompt~$p$ in the generation process, one guarantees that the immediately generated image, when evaluated on the concept group $\sig{cgf}_i$,  always\footnote{Here we omit technical details, but one can also have relaxations such as having a probabilistic guarantee.} leads to manifesting concept group value~$cg$. In implementation, whether a prompt is biased or not can also be checked via querying an LLM. 

Altogether, by clearly defining the meaning of a biased prompt, we can now define the \emph{inherent fairness} of a generator, which requires that \emph{when all input prompts used to generate images of a group are neutral, fairness remains ensured}. Def.~\ref{def:inherient.fairness.eventual} characterizes eventual fairness, while it is a straightforward extension to characterize repeated fairness. 

\begin{definition}[Inherent fairness of the generator]\label{def:inherient.fairness.eventual}
    Let $\sig{cgf}_1$ and $\sig{cgf}_2$ be two concept grouping functions, and let~$\mathcal{G}$ be the generator function. Let~$\langle img_i \rangle \defeq img_1 \cdot img_2 \cdots \in \mathcal{I}^{\omega}$ be any infinite sequence of images, generated by $\mathcal{G}$ using prompt sequence $\langle p_i \rangle$. Given $\langle img'_i \rangle $ defined by Eq.~\eqref{eq:removal}, let $\langle p'_i \rangle$ be the corresponding prompt sequence. Then  \textbf{$\mathcal{G}$ is inherently fair with eventual appearance} for concept group~$2$ under $\sig{cgf}_1$ evaluated to~$cg$, abbreviated as $\langle\frac{\sig{cgf}_2}{\sig{cgf}_1 \Leftarrow  cg}\rangle$ $\lozenge$-fair, if the following condition holds:

\vspace{-2mm}

\begin{equation}\label{eq:inherient.fairness.eventual}
\begin{split}
  \forall i > 0: p'_i \text{ is not biased subject to } \sig{cgf}_2(\cdot)\\
\rightarrow (\forall k \in [1 \cdots \sig{CG}_2]: \exists m \geq 1: \sig{cgf}_2(img'_m) = k)
\end{split}
\end{equation}

\end{definition}

In Eq.~\eqref{eq:inherient.fairness.eventual}, it demands that if the prompt is not biased (utilizing Def.~\ref{def:biased.prompts}), then fairness of eventual occurrence should hold in the generated image sequence, resembling the formulation in Def.~\ref{def:seq.fair.eventual}. 

\subsection{Assessing Fairness on Finite Sequence}
\label{sub-sec:finite.sequence}

While the previously stated theoretical framework is based on infinite sequences of images, in practice, assessing fairness is commonly done on image sequences of finite length. In this situation, it is natural to change in Def.~\ref{def:seq.fair.eventual} from $\mathcal{I}^{\omega}$ (infinite word) to $\mathcal{I}^{*}$ (finite word) so that eventual appearance should be manifested in the finite image sequence. 

Defining fairness with $\vec{\beta}$-bounded repeated appearance requires an assumption on extrapolating what happens if the finite image sequence is further extended, where we borrow the idea of \emph{weak-next} $\ocircle_{\text{w}}$ operator as defined in LTL over finite traces~\cite{fionda2018ltl} which assumes the repetition trend will hold. Further details on the formulation can be found in the appendix (as supplementary material).

\subsection{Approximating Intersectional Fairness}
\label{sub-sec:fairness.generator}

So far, the $\langle\frac{\sig{cgf}_2}{\sig{cgf}_1 \Leftarrow cg}\rangle$ $\lozenge$-fairness has been used to ensure the presence of all concept group values when considering a single categorization with~$\sig{cgf}_2$. We can extend the concept to build fairness of manifesting \emph{fairness in criterion pairs} $\langle\frac{\sig{cgf}_x, \sig{cgf}_y}{\sig{cgf}_w \Leftarrow  cg}\rangle$ and in \emph{criterion triplets} $\langle\frac{\sig{cgf}_x, \sig{cgf}_y, \sig{cgf}_z}{\sig{cgf}_w \Leftarrow  cg}\rangle$, where the definition immediately follows (e.g., Def.~\ref{def:paired.fair}). As an example, consider the gender and age as two category groups, the extension on fairness in criterion pairs $\langle\frac{\sig{cgf}_{gender}, \sig{cgf}_{age}}{\sig{cgf}_w \Leftarrow cg}\rangle$ then demands fairness to be observed in image sequences with all combinations defined by the following set $\{ (cg_x, cg_y) \}$ with $cg_x \in \{ 1/\text{female}, 2/\text{male} \}$ and $cg_y \in \{1/\text{child}, 2/\text{adult}, 3/\text{elderly}\}$.

\begin{definition}[Sequence paired fairness with eventual appearance]\label{def:paired.fair}
    Let $\sig{cgf}_w$, $\sig{cgf}_x$ and $\sig{cgf}_y$ be three concept grouping functions, and let~$\langle img_i \rangle \defeq img_1 \cdot img_2 \cdots \in \mathcal{I}^{\omega}$ be the finite sequence of images. Then~$\langle img_i \rangle$ is \textbf{paired-fair with eventual appearance}  for concept group~$x$ and~$y$ under $\sig{cgf}_w$ evaluated to~$cg$, abbreviated as $\langle\frac{\sig{cgf}_x, \sig{cgf}_y}{\sig{cgf}_w \Leftarrow  cg}\rangle$ $\lozenge$-fair, if given $\langle img'_i \rangle $ defined by Eq.~\eqref{eq:removal}, the following condition holds.
\begin{equation}
    \begin{split}
        \forall k_1 \in [1 \cdots \sig{CG}_x], k_2 \in [1 \cdots \sig{CG}_y]:  \\ \exists m \geq 1: \sig{cgf}_x(img'_m) = k_1 \wedge \sig{cgf}_y(img'_m) = k_2
    \end{split}
\end{equation}

\end{definition}

Given $\sig{cgf}_w$ and additional~$K$ concept grouping functions $\sig{cgf}_1, \ldots, \sig{cgf}_K$, one can extend Def.~\ref{def:paired.fair} and analogously define $\langle\frac{\sig{cgf}_1, \ldots, \sig{cgf}_K}{\sig{cgf}_w \Leftarrow  cg}\rangle$ $\lozenge$-fairness, which demands that all combinations of concept group values should eventually appear. This leads to a concept similar to avoiding intersectional biases defined in the literature~\cite{kirk2021bias,buolamwini2018gender,buyl2024inherent}. However, given~$K$ concept grouping functions with each having a binary assignment $\{1, 2\}$ (e.g., female and male), it is well known that \emph{combinatorial explosion} exists, meaning that there is a need to manifest~$2^K$ assignments (thereby enforcing the sequence to be at least~$2^K$ in length) to achieve intersectional fairness.

Encountering this, we thus borrow the technique from $k$-way combinatorial testing~\cite{nie2011survey} to provide a weaker form of intersectional fairness (\emph{approximate intersectional fairness}) whose satisfaction requires only a polynomially bounded number of images. Def.~\ref{def:all.paired.fairness} ensures that for every pair (2-way combinations) of concept group functions, all concept group value combinations are eventually manifested. The universal quantifier $x, y \in [1 \cdots K],  x \neq y$ in Def.~\ref{def:all.paired.fairness} only selects pairs of concept group functions as the conditions to be satisfied. 

\begin{definition}[Sequence all-paired fairness with eventual appearance]\label{def:all.paired.fairness}
    Given $\sig{cgf}_w$ and additional~$K$ concept grouping functions $\sig{cgf}_1, \ldots, \sig{cgf}_K$, let~$\langle img_i \rangle$ $\in \mathcal{I}^{*}$ be the finite sequence of images. Then $\langle img_i \rangle $ is \textbf{all-paired-fair with eventual appearance}  for concept groups~$[1 \cdots K]$ under $\sig{cgf}_w$ evaluated to~$cg$, abbreviated as $\langle\frac{\forall_{x,y \in [1 \cdots K]} \sig{cgf}_x, \sig{cgf}_y}{\sig{cgf}_w \Leftarrow  cg}\rangle$ $\lozenge$-fair, if given $\langle img'_i \rangle $ defined by Eq.~\eqref{eq:removal},  the following condition holds.  

\vspace{-5mm}
    
\begin{equation}
    \begin{split}
    \forall x, y \in [1 \cdots K],  x \neq y: \\
        \forall k_1 \in [1 \cdots \sig{CG}_x], k_2 \in [1 \cdots \sig{CG}_y]:  \\ \exists m \geq 1: \sig{cgf}_x(img'_m) = k_1 \wedge \sig{cgf}_y(img'_m) = k_2
    \end{split}
\end{equation}

\end{definition} 

\section{Enforcing Fairness of GenAI}
\label{sec:enforcement}

Finally, while previously stated definitions assume the prompts to be not biased (cf. Def.~\ref{def:biased.prompts}), one can also explicitly \emph{inject} prompts biased towards a specific concept group value to \emph{enforce} fairness, where we focus on enforcing $\langle\frac{\sig{cgf}_2}{\sig{cgf}_1 \Leftarrow  cg}\rangle$ $\square\lozenge_{\leq \vec{\beta}}$-fairness as characterized in Def.~\ref{def:bounded.repeated.fair}. 

Note that when all elements in $\vec{\beta}$ are the same ($\forall j \in [1 \cdots {CG}_2]: \beta_j = \beta$), one trivial way is to explicitly control every prompt to manifest round-robin behavior. An example for $\sig{cgf}_{2}$ being $\sig{cgf}_{age}(\cdot)$ would be to enforce the generator to create the images strictly using the following ordering of values: ``child'', ``adult'', ``elderly''. Our interest, however, is to aim for \emph{minimum interference}. We aim to inject biased (enforcing) prompts when necessary. Algo.~\ref{algo:enforcing.fairness} presents our fairness enforcement method via prompt injection.  

\begin{algorithm}[tb]
   \caption{Enforcing $\langle\frac{\sig{cgf}_2}{\sig{cgf}_1 \Leftarrow  cg}\rangle$ $\square\lozenge_{\leq \vec{\beta}}$-fairness }\label{algo:enforcing.fairness}
   \label{alg:certify}
\begin{algorithmic}[1]

   \STATE \textbf{let} $\sig{c}[i] \gets \beta_i$ (for all $i \in [1\cdots CG_2]$)
   \WHILE {\textbf{true}}
   \STATE   $p \gets
    \text{Get prompt from the user for image generation}$
    \IF{$p$ is unrelated to $\sig{cgf}_1 \Leftarrow  cg$, or $p$ is biased subject to $\sig{cgf}_2(\cdot)$}
       \STATE
        \textbf{output} $\mathcal{G}^s(p)$ to user
    \STATE \textbf{continue} 
   \ENDIF
    
 \FOR{$k= \sig{CG}_2$ \TO $1$} 
   \IF{$\exists \text{ distinct } cg_{21},\ldots, cg_{2k}: \sig{c}[cg_{21}] = \sig{c}[cg_{22}] = \dots = \sig{c}[cg_{2k}]=k$ }

   \STATE $cg_2 \gets \sig{random}\{cg_{21}, \ldots, cg_{2k}\}$
   \STATE   $p \gets p \; \cdot \; $ \parbox[t]{.6\linewidth}{%
    $\tighttexttt{"Enforce the generated image  }$ 
    $\text{ such that } \sig{cfg}_2(\cdot) = cg_2 \tighttexttt{"}$}
    \STATE \textbf{break} 
   \ENDIF
 \ENDFOR

    \STATE \textbf{output} $img = \mathcal{G}^s(p)$ to user

    \STATE $cg \gets \sig{cgf}_2(img)$
 \FOR{$i= \sig{CG}_2$ \TO $1$} 
    \STATE \textbf{if} $cg = i$ \textbf{then} $c[i] \gets \beta_i$  
    \STATE \textbf{else} $c[i] \gets c[i] - 1$

 \ENDFOR    
   \ENDWHILE
\end{algorithmic}
\end{algorithm}

\begin{figure}[t]
\centerline{\includegraphics[width=0.8\columnwidth]{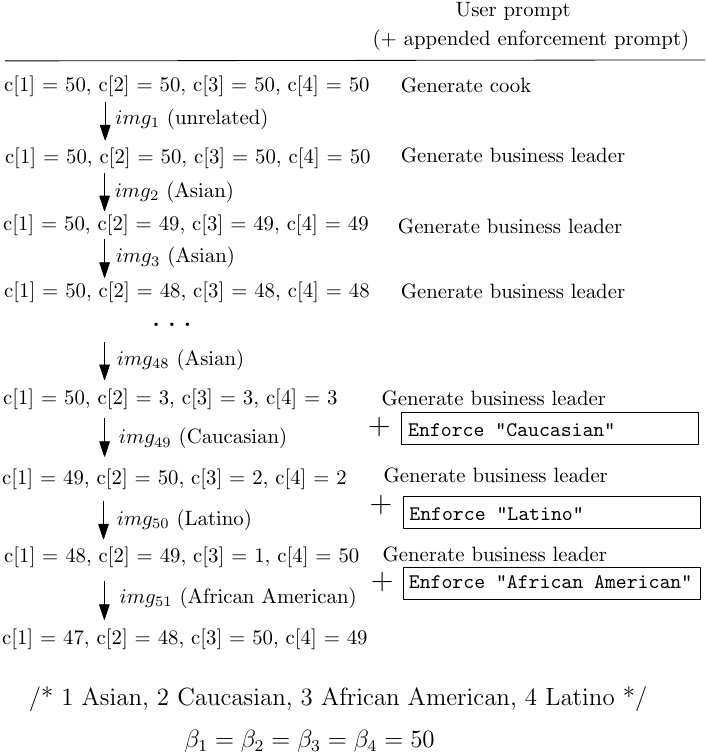}}
\caption{Example illustrating how Algo.~\ref{algo:enforcing.fairness} is applied} \label{fig:traces.algo.enforcement}
\end{figure}

Initially (line~$1$), define an array~$c$ ranging from $1$ to $\sig{CG}_2$, with initial value of $c[i]$ set to~$\beta_i$. $c[i]$ tracks the deadline before the concept group value~$i$ shall manifest. The algorithm continues by using an infinite loop to continuously fetch prompts from users sending image synthesis requests (lines~$2,3$). If the received prompt~$p$ is unrelated to the condition where fairness shall be manifested, or if $p$ is biased, then proceed by outputting the result (lines~$4,5,6$) as there is no need to take the generated image into fairness evaluation. In our evaluation, the checking at line~$4$ is done via querying a separate LLM. Otherwise, 
the for-loop and the following condition (lines~$7,8$) try to detect if there are multiple concept group values whose deadline is approaching. When there exist~$k$ concept group values whose deadline equals~$k$ (line~8), it is then \emph{mandatory} to use biased / non-neutral prompts to ensure one of the concept group values is selected in the corresponding image generation process, as reflected in lines~$9$ and~$10$. Note that the for-loop at line~$7$ iterates from $k=\sig{CG}_2$ to $1$ (not the other way round), as it is important to signal the issue as early as possible.  Finally, an image is produced (line~$12$) and sent back to the user, and for the concept group value~$i$ matching the concept group value of the image~$cg$ (line~$13$, in our implementation via calling a vision-attribute identification model moondream\footnote{\url{https://github.com/vikhyat/moondream}}), the counter~$c[i]$ is reset to~$\beta_i$ (line~$15$) while counters of other values are subtracted by~$1$ (line~$16$). 

\begin{example}\label{ex:enforcement}
    Fig.~\ref{fig:traces.algo.enforcement} illustrates an example on how Algo.~\ref{algo:enforcing.fairness} is applied to enforce fairness under ``successful business leader'', where $\sig{cgf}_{demographics}:\mathcal{I} \rightarrow [0\cdots 4]$, with $\beta_1 = \beta_2 = \beta_3 = \beta_4 =50$. Initially, $c[1]=c[2]=c[3]=c[4]=50$. With the first prompt asking for a ``cook'', as it is unrelated to the concept group ``successful business leader'', the ``if'' statement at line~$4$ holds, so no additional action is needed apart from image synthesis at lines~$5$ and~$6$. For the second service request with prompt ``\tighttexttt{Generate business leader}'', although it is related, enforcement is not triggered as the condition at line~$8$ does not hold. As the synthesized image has $\sig{cgf}_2(img_2) = 1$ (Asian), $c[2],c[3],c[4]$ are decreased by~$1$ to~$49$ while $c[1]$ is reset to~$50$. When the generator continuously produces images of Asians, it eventually leads to the case $c[1]=c[2]=c[3]=3$. Consequently, the condition at line~$8$ holds, implying the need to enforce fairness. Then, one of the concept group values is taken from random to be enforced, whereas in Fig.~\ref{fig:traces.algo.enforcement}, the first value being taken equals~$2$ (Caucasian). 
\end{example}

For the correctness of the algorithm, the key insight is the use of the lookahead mechanism (lines~$7$ to~$10$), which reserves a buffer to react in a timely fashion. Provided that $\forall i \in [1 \cdots CG_2]: \beta_i > CG_2$ (at least possible to do round-robin), and the neural model performs correctly as intended (e.g., the LLM-based checking for prompt~$p$ at line~$4$ always returns the correct result, and the appended enforcement prompt always leads to the desired output), the correctness proof follows standard strategies that appear in the real-time scheduling theory textbook for proving the freeness of deadline violations. 

\section{Evaluation}
\label{sec:evaluation}

We have evaluated fairness on two image-based generative AI tools, namely ChatGPT~$4.0$ connected with DALL$\cdot$E3 from OpenAI as well as GLM-4 from ZHIPU AI, where Fig.~\ref{fig:Evaluation} illustrates some of the image sequences produced by these tools. We have drawn multiple image sequences using neutral prompts, where the length of each image sequence is at least~$40$.

First, we observed that fairness is not universally enabled on all concepts in a generative AI model, as satisfaction can be conditional to certain concept group values. This confirms the appropriateness of our definition, which is always conditioned to a particular concept group value. 
\begin{itemize}
    \item \textbf{(Different degree of gender fairness)} When considering gender to be binary (male, female) based on the facial characteristics, for ``successful business leader'', within all sequences, the minimum~$\beta_k$ value that satisfies the finite version of Def.~\ref{def:bounded.repeated.fair} is tightly centered around~$5$. However, for ``poor person'', female figures are substantially less presented, reflecting the minimum~$\beta_k$ value for image sequences being between~$13$ and~$20$. 

    \vspace{2mm}
    \item \textbf{(Different degree of demographic fairness)} For demographics, when it comes to ``successful business leader'', fairness with repeated appearance can be manifested, even for the concept group value ``native American'', as demonstrated in the~$11^{\text{th}}$ image at the second row of Fig.~\ref{fig:Evaluation}. This contrasts with the case of a ``poor person'' or ``overweight person'', where even fairness with eventual appearance can not be manifested due to missing images in the concept group value in some image sequences. 
\end{itemize}

Second, we observed that even when using the approximate intersectional fairness criterion defined in Def.~\ref{def:all.paired.fairness}, generative AI models still struggle to include all combinations of pair-wise features. In our experiment, we generated multiple image sequences of length~$80$ using the GLM-4 model for ``successful person'', with explicit demands on diversity (using the neutral prompt ``Generate an image being different from previously generated ones''), where we considered four concept group functions including demographics, gender, occupation,
and the character being fuller-figured. The prompt explicitly hints the generative AI model to consider diversities at least in these four aspects. 
We normalize the value by dividing it with $\sum_{ x \in [1 \cdots K]} \sum_{y \in [1 \cdots K], y> x} (\sig{CG}_x)(\sig{CG}_y)$ in Def.~\ref{def:all.paired.fairness}, as the denominator represents the number of all-paired combinations needed. The tendency increase is illustrated in Fig.~\ref{fig:all.paired.tendency}, where each line corresponds to the behavior of one image sequence. With the maximum possible value being~$1$, we observed that the coverage is always below $0.4$. Apart from one sequence, the coverage largely saturates after~$25$ images (reflected as a horizontal line), implying that the subsequent generated images highly resemble the first~$25$ images regarding concept group values (e.g., repeating doctors). Our results suggest a huge potential for the GLM-4 generative model to improve intersectional fairness in its image generation process.

Finally, we have also implemented the enforcement mechanism in Algo.~\ref{algo:enforcing.fairness} to guarantee fairness, where the 5th row of Fig~\ref{fig:Evaluation} illustrates a clear improvement in ZHIPU GLM-4 in comparison to the original image sequence at the 4th row. In addition, we have the enforcement implemented as a web service connecting Gemma~2 LLM~\cite{team2024gemma}, Stable Diffusion\footnote{\url{https://huggingface.co/runwayml/stable-diffusion-v1-5}} and moondream. The research prototype is made available in the supplementary material.

\begin{figure}[t]
\centering

\begin{tikzpicture}[scale=1.0]
\begin{axis}[
    xlabel={Size of included images $\{img_1, \ldots, img_k\}$},
    ylabel={All-paired-fairness coverage},
    ymin=0, ymax=1, 
    xmin=1, xmax=80, 
    ytick={0,0.2,0.4,0.6,0.8,1},
    xtick={0,20,40,60,80},
    grid=major
]
\addplot [
    mark size=1pt, 
     color=blue, 
] coordinates {
(0, 0.0)
(1, 0.042)
(2, 0.085)
(3, 0.113)
(4, 0.127)
(5, 0.162)
(6, 0.197)
(7, 0.204)
(8, 0.204)
(9, 0.204)
(10, 0.211)
(11, 0.211)
(12, 0.211)
(13, 0.211)
(14, 0.211)
(15, 0.211)
(16, 0.211)
(17, 0.218)
(18, 0.218)
(19, 0.218)
(20, 0.239)
(21, 0.239)
(22, 0.239)
(23, 0.239)
(24, 0.239)
(25, 0.239)
(26, 0.239)
(27, 0.239)
(28, 0.239)
(29, 0.239)
(30, 0.239)
(31, 0.239)
(32, 0.239)
(33, 0.239)
(34, 0.239)
(35, 0.239)
(36, 0.239)
(37, 0.239)
(38, 0.239)
(39, 0.239)
(40, 0.239)
(41, 0.239)
(42, 0.239)
(43, 0.239)
(44, 0.239)
(45, 0.239)
(46, 0.239)
(47, 0.239)
(48, 0.239)
(49, 0.239)
(50, 0.239)
(51, 0.239)
(52, 0.239)
(53, 0.239)
(54, 0.239)
(55, 0.239)
(56, 0.239)
(57, 0.239)
(58, 0.239)
(59, 0.239)
(60, 0.239)
(61, 0.239)
(62, 0.239)
(63, 0.239)
(64, 0.239)
(65, 0.239)
(66, 0.239)
(67, 0.239)
(68, 0.239)
(69, 0.239)
(70, 0.239)
(71, 0.239)
(72, 0.239)
(73, 0.239)
(74, 0.239)
(75, 0.239)
(76, 0.239)
(77, 0.239)
(78, 0.239)
(79, 0.239)
(80, 0.261)
};
\addplot [
    mark size=1pt, 
     color=red, 
] coordinates {
(0, 0.0)
(1, 0.042)
(2, 0.077)
(3, 0.099)
(4, 0.106)
(5, 0.127)
(6, 0.148)
(7, 0.169)
(8, 0.169)
(9, 0.169)
(10, 0.169)
(11, 0.176)
(12, 0.176)
(13, 0.176)
(14, 0.176)
(15, 0.176)
(16, 0.176)
(17, 0.183)
(18, 0.183)
(19, 0.183)
(20, 0.183)
(21, 0.204)
(22, 0.204)
(23, 0.211)
(24, 0.211)
(25, 0.211)
(26, 0.211)
(27, 0.211)
(28, 0.211)
(29, 0.211)
(30, 0.211)
(31, 0.211)
(32, 0.211)
(33, 0.211)
(34, 0.211)
(35, 0.211)
(36, 0.211)
(37, 0.211)
(38, 0.211)
(39, 0.211)
(40, 0.211)
(41, 0.211)
(42, 0.211)
(43, 0.211)
(44, 0.211)
(45, 0.211)
(46, 0.211)
(47, 0.211)
(48, 0.211)
(49, 0.211)
(50, 0.211)
(51, 0.211)
(52, 0.211)
(53, 0.211)
(54, 0.211)
(55, 0.211)
(56, 0.211)
(57, 0.211)
(58, 0.211)
(59, 0.211)
(60, 0.211)
(61, 0.211)
(62, 0.211)
(63, 0.211)
(64, 0.211)
(65, 0.211)
(66, 0.211)
(67, 0.211)
(68, 0.211)
(69, 0.211)
(70, 0.211)
(71, 0.211)
(72, 0.211)
(73, 0.211)
(74, 0.211)
(75, 0.211)
(76, 0.211)
(77, 0.211)
(78, 0.211)
(79, 0.211)
(80, 0.211)
};
\addplot [
    mark size=1pt, 
     color=orange, 
] coordinates {
(0, 0.0)
(1, 0.042)
(2, 0.077)
(3, 0.099)
(4, 0.134)
(5, 0.155)
(6, 0.162)
(7, 0.162)
(8, 0.183)
(9, 0.190)
(10, 0.211)
(11, 0.211)
(12, 0.211)
(13, 0.211)
(14, 0.218)
(15, 0.232)
(16, 0.246)
(17, 0.246)
(18, 0.246)
(19, 0.246)
(20, 0.246)
(21, 0.246)
(22, 0.246)
(23, 0.246)
(24, 0.246)
};
\addplot [
    mark size=1pt, 
     color=black, 
] coordinates {
(0, 0.0)
(1, 0.042)
(2, 0.077)
(3, 0.113)
(4, 0.120)
(5, 0.141)
(6, 0.162)
(7, 0.162)
(8, 0.183)
(9, 0.197)
(10, 0.197)
(11, 0.218)
(12, 0.218)
(13, 0.218)
(14, 0.218)
(15, 0.218)
(16, 0.218)
(17, 0.218)
(18, 0.218)
(19, 0.218)
(20, 0.225)
(21, 0.225)
(22, 0.225)
(23, 0.232)
(24, 0.232)
(25, 0.246)
(26, 0.246)
(27, 0.246)
(28, 0.246)
(29, 0.246)
(30, 0.246)
(31, 0.246)
(32, 0.246)
(33, 0.246)
(34, 0.246)
(35, 0.246)
(36, 0.246)
(37, 0.246)
(38, 0.246)
(39, 0.246)
(40, 0.246)
(41, 0.246)
(42, 0.246)
(43, 0.246)
(44, 0.246)
(45, 0.246)
(46, 0.246)
(47, 0.246)
(48, 0.246)
(49, 0.246)
(50, 0.246)
(51, 0.246)
(52, 0.246)
(53, 0.246)
(54, 0.246)
(55, 0.246)
(56, 0.246)
(57, 0.246)
(58, 0.246)
(59, 0.246)
(60, 0.246)
(61, 0.246)
(62, 0.246)
(63, 0.246)
(64, 0.246)
(65, 0.246)
(66, 0.246)
(67, 0.246)
(68, 0.246)
(69, 0.246)
(70, 0.246)
(71, 0.246)
(72, 0.246)
(73, 0.246)
(74, 0.246)
(75, 0.246)
(76, 0.246)
(77, 0.246)
(78, 0.246)
(79, 0.246)
(80, 0.246)
};

\addplot [
    mark size=1pt, 
     color=green, 
] coordinates {
(0, 0.0)
(1, 0.042)
(2, 0.077)
(3, 0.099)
(4, 0.134)
(5, 0.134)
(6, 0.155)
(7, 0.155)
(8, 0.176)
(9, 0.176)
(10, 0.183)
(11, 0.190)
(12, 0.204)
(13, 0.211)
(14, 0.218)
(15, 0.218)
(16, 0.225)
(17, 0.225)
(18, 0.239)
(19, 0.239)
(20, 0.239)
(21, 0.239)
(22, 0.239)
(23, 0.239)
(24, 0.261)
(25, 0.261)
(26, 0.261)
(27, 0.282)
(28, 0.289)
(29, 0.310)
(30, 0.310)
(31, 0.310)
(32, 0.310)
(33, 0.310)
(34, 0.310)
(35, 0.310)
(36, 0.310)
(37, 0.310)
(38, 0.310)
(39, 0.310)
(40, 0.310)
(41, 0.310)
(42, 0.310)
(43, 0.310)
(44, 0.310)
(45, 0.310)
(46, 0.310)
(47, 0.310)
(48, 0.310)
(49, 0.310)
(50, 0.310)
(51, 0.310)
(52, 0.310)
(53, 0.310)
(54, 0.310)
(55, 0.310)
(56, 0.331)
(57, 0.331)
(58, 0.331)
(59, 0.331)
(60, 0.331)
(61, 0.331)
(62, 0.331)
(63, 0.331)
(64, 0.331)
(65, 0.331)
(66, 0.331)
(67, 0.331)
(68, 0.331)
(69, 0.331)
(70, 0.331)
(71, 0.331)
(72, 0.331)
(73, 0.331)
(74, 0.331)
(75, 0.331)
(76, 0.331)
(77, 0.331)
(78, 0.331)
(79, 0.331)
(80, 0.331)
};

\end{axis}
\end{tikzpicture}
\caption{The tendency of all-paired-fairness increase with GLM-4 generated image samples on ``successful person''; the orange line is shorter as GLM-4 refuses to generate images further upon request}
\label{fig:all.paired.tendency}
\end{figure}

\section{Concluding Remarks}
\label{sec:concluding.remarks}
Our formal approach to fairness in generative AI uniquely defines fairness through the lens of infinite sequences of interactions between GenAI and its clients, allowing for a dynamic assessment and enforcement of fairness over time. 
By distinguishing between the fairness demonstrated in generated sequences and the inherent fairness of the AI system, we have established a comprehensive method that recognizes the nuanced nature of fairness as conditioned by specific concepts. 
In addition, our formal approach addresses the challenge of intersectional fairness through combinatorial testing, providing a scalable method for dealing with the combinatorial explosion of category combinations. 
This is critical to ensuring that the fairness measures remain effective even as the complexity of AI systems and the diversity of fairness dimensions increase.
Initial experimental evaluations show that fairness enforcement techniques based on runtime monitoring of fairness conditions effectively manage inherent bias.

A critical path forward is the practical application of GenAI fairness enforcement techniques within larger organizations and with active operational management at scale. 
Another important direction for future work is to extend the notion of conditional fairness to an \emph{average arrival view}\@.

\section*{Acknowledgement} Funded by the European Union. Views and opinions expressed are however those of the author(s) only and do not necessarily reflect those of the European Union or the European Health and Digital Executive Agency (HADEA). Neither the European Union nor the granting authority can be held responsible for them. RobustifAI project, ID 101212818.

\bibliographystyle{splncs04}

\begin{thebibliography}{10}
\providecommand{\url}[1]{\texttt{#1}}
\providecommand{\urlprefix}{URL }
\providecommand{\doi}[1]{https://doi.org/#1}

\bibitem{albarghouthi2019fairness}
Albarghouthi, A., Vinitsky, S.: Fairness-aware programming. In: Proceedings of
  the Conference on Fairness, Accountability, and Transparency. pp. 211--219
  (2019)

\bibitem{alur1996benefits}
Alur, R., Feder, T., Henzinger, T.A.: The benefits of relaxing punctuality.
  Journal of the ACM (JACM)  \textbf{43}(1),  116--146 (1996)

\bibitem{buolamwini2018gender}
Buolamwini, J., Gebru, T.: Gender shades: Intersectional accuracy disparities
  in commercial gender classification. In: Conference on Fairness,
  Accountability, and Transparency (FAT). pp. 77--91. PMLR (2018)

\bibitem{buyl2024inherent}
Buyl, M., De~Bie, T.: Inherent limitations of {AI} fairness. Communications of
  the ACM  \textbf{67}(2),  48--55 (2024)

\bibitem{cano2025fairness}
Cano, F., Henzinger, T.A., K{\"o}nighofer, B., Kueffner, K., Mallik, K.:
  Fairness shields: Safeguarding against biased decision makers. In:
  Proceedings of the AAAI Conference on Artificial Intelligence. vol.~39, pp.
  15659--15668 (2025)

\bibitem{crenshaw2013demarginalizing}
Crenshaw, K.: Demarginalizing the intersection of race and sex: A black
  feminist critique of antidiscrimination doctrine, feminist theory and
  antiracist politics. In: Feminist Legal Theories, pp. 23--51. Routledge
  (2013)

\bibitem{de2013linear}
De~Giacomo, G., Vardi, M.Y., et~al.: Linear temporal logic and linear dynamic
  logic on finite traces. In: International Joint Conference on Artificial
  Intelligence (IJCAI). vol.~13, pp. 854--860 (2013)

\bibitem{ernst2023bias}
Ernst, J.S., Marton, S., Brinkmann, J., Vellasques, E., Foucard, D., Kraemer,
  M., Lambert, M.: Bias mitigation for large language models using adversarial
  learning. In: AEQUITAS Workshop on Fairness and Bias in AI (2023)

\bibitem{fabbrizzi2022survey}
Fabbrizzi, S., Papadopoulos, S., Ntoutsi, E., Kompatsiaris, I.: A survey on
  bias in visual datasets. Computer Vision and Image Understanding
  \textbf{223},  103552 (2022)

\bibitem{ferrara2023fairness}
Ferrara, E.: Fairness and bias in artificial intelligence: A brief survey of
  sources, impacts, and mitigation strategies. Sci  \textbf{6}(1), ~3 (2023)

\bibitem{fionda2018ltl}
Fionda, V., Greco, G.: {LTL} on finite and process traces: Complexity results
  and a practical reasoner. Journal of Artificial Intelligence Research
  \textbf{63},  557--623 (2018)

\bibitem{gohar2023survey}
Gohar, U., Cheng, L.: A survey on intersectional fairness in machine learning:
  Notions, mitigation, and challenges. arXiv preprint arXiv:2305.06969  (2023)

\bibitem{he2021unlearn}
He, H., Zha, S., Wang, H.: Unlearn dataset bias in natural language inference
  by fitting the residual. In: Workshop on Deep Learning Approaches for
  Low-Resource Natural Language Processing, DeepLo@ EMNLP-IJCNLP 2019. pp.
  132--142. ACL (2021)

\bibitem{henzinger2023monitoring}
Henzinger, T.A., Karimi, M., Kueffner, K., Mallik, K.: Monitoring algorithmic
  fairness. In: International Conference on Computer Aided Verification. pp.
  358--382. Springer (2023)

\bibitem{huang2023bias}
Huang, D., Bu, Q., Zhang, J., Xie, X., Chen, J., Cui, H.: Bias assessment and
  mitigation in llm-based code generation. arXiv preprint arXiv:2309.14345
  (2023)

\bibitem{kearns2018preventing}
Kearns, M., Neel, S., Roth, A., Wu, Z.S.: Preventing fairness gerrymandering:
  Auditing and learning for subgroup fairness. In: International Conference on
  Machine Learning (ICML). pp. 2564--2572. PMLR (2018)

\bibitem{kiritchenko2018examining}
Kiritchenko, S., Mohammad, S.: Examining gender and race bias in two hundred
  sentiment analysis systems. In: Joint Conference on Lexical and Computational
  Semantics (*SEM). pp. 43--53 (2018)

\bibitem{kirk2021bias}
Kirk, H.R., Jun, Y., Volpin, F., Iqbal, H., Benussi, E., Dreyer, F.,
  Shtedritski, A., Asano, Y.: Bias out-of-the-box: An empirical analysis of
  intersectional occupational biases in popular generative language models.
  Advances in Neural Information Processing Systems (NeurIPS)  \textbf{34},
  2611--2624 (2021)

\bibitem{li2019repair}
Li, Y., Vasconcelos, N.: {REPAIR}: Removing representation bias by dataset
  resampling. In: IEEE/CVF Conference on Computer Vision and Pattern
  Recognition (CVPR). pp. 9572--9581 (2019)

\bibitem{limisiewicz2023debiasing}
Limisiewicz, T., Mare{\v{c}}ek, D., Musil, T.: Debiasing algorithm through
  model adaptation. In: International Conference on Learning Representations
  (ICLR) (2023)

\bibitem{nie2011survey}
Nie, C., Leung, H.: A survey of combinatorial testing. ACM Computing Surveys
  (CSUR)  \textbf{43}(2),  1--29 (2011)

\bibitem{pnueli1977temporal}
Pnueli, A.: The temporal logic of programs. In: Annual Symposium on Foundations
  of Computer Science (SFCS). pp. 46--57. IEEE (1977)

\bibitem{ramezani2023knowledge}
Ramezani, A., Xu, Y.: Knowledge of cultural moral norms in large language
  models. arXiv preprint arXiv:2306.01857  (2023)

\bibitem{ranaldi2023trip}
Ranaldi, L., Ruzzetti, E.S., Venditti, D., Onorati, D., Zanzotto, F.M.: A trip
  towards fairness: Bias and de-biasing in large language models. arXiv
  preprint arXiv:2305.13862  (2023)

\bibitem{ruess2024fairness}
Ruess, H.: Fairness analysis with {S}hapley-{O}wen effects. arXiv preprint
  arXiv:2409.19318  (2024)

\bibitem{sheppard2023subtle}
Sheppard, B., Richter, A., Cohen, A., Smith, E.A., Kneese, T., Pelletier, C.,
  Baldini, I., Dong, Y.: Subtle misogyny detection and mitigation: An
  expert-annotated dataset. arXiv preprint arXiv:2311.09443  (2023)

\bibitem{tan2019assessing}
Tan, Y.C., Celis, L.E.: Assessing social and intersectional biases in
  contextualized word representations. Advances in Neural Information
  Processing Systems (NeurIPS)  \textbf{32} (2019)

\bibitem{tao2023auditing}
Tao, Y., Viberg, O., Baker, R.S., Kizilcec, R.F.: Auditing and mitigating
  cultural bias in llms. arXiv preprint arXiv:2311.14096  (2023)

\bibitem{team2024gemma}
Team, G., Riviere, M., Pathak, S., Sessa, P.G., Hardin, C., Bhupatiraju, S.,
  Hussenot, L., Mesnard, T., Shahriari, B., Ram{\'e}, A., et~al.: Gemma 2:
  Improving open language models at a practical size. arXiv preprint
  arXiv:2408.00118  (2024)

\bibitem{tommasi2017deeper}
Tommasi, T., Patricia, N., Caputo, B., Tuytelaars, T.: A deeper look at dataset
  bias. Domain Adaptation in Computer Vision Applications pp. 37--55 (2017)

\bibitem{torralba2011unbiased}
Torralba, A., Efros, A.: Unbiased look at dataset bias. In: IEEE/CVF Conference
  on Computer Vision and Pattern Recognition (CVPR). pp. 1521--1528 (2011)

\bibitem{ungless2022robust}
Ungless, E., Rafferty, A., Nag, H., Ross, B.: A robust bias mitigation
  procedure based on the stereotype content model. In: Workshop on Natural
  Language Processing and Computational Social Science (NLP+CSS). pp. 207--217
  (2022)

\end{thebibliography}
%

\appendix

\subsection*{Appendix: Accessing Fairness on Finite Image Sequences}
\label{sub.sec:bounded.sequences}

While the previously stated theoretical framework is based on infinite sequences of images, in practice, accessing fairness is commonly done on image sequences of finite length. In this situation, it is natural to change in Def.~\ref{def:seq.fair.eventual} from $\mathcal{I}^{\omega}$ (infinite word) to $\mathcal{I}^{*}$ (finite word) so that eventual appearance should be manifested in the finite image sequence. 

The technical obscurity occurs in definitions related to the repeated occurrence (e.g., Def.~\ref{def:bounded.repeated.fair}), where manifesting repeated occurrence on finite traces is impossible.\footnote{It is well known in the theory of linear temporal logic of finite traces~\cite{de2013linear} that the meaning of response property (i.e., $\square\lozenge$ in LTL) has different interpretations when defining infinite and finite traces.} On finite traces, defining fairness with repeated appearance requires \emph{an assumption on extrapolating what happens if the finite image sequence is further extended}. 

For Def.~\ref{def:bounded.repeated.fair} on fairness with $\beta$-bounded repeated appearance, however, recall that bounded response $\lozenge_{\leq \beta} \phi$ in LTL over infinite words can be rewritten using the \emph{neXt} ($\ocircle$) operator, i.e., $\phi \vee \ocircle \phi \vee \ocircle\ocircle \phi \vee \ldots \vee \ocircle^{\beta} \phi$. To apply it in finite traces, the idea of the \emph{weak-next} $\ocircle_{\text{w}}$ operator as defined in LTL over finite traces~\cite{fionda2018ltl} is useful, where $\ocircle_{\text{w}} \phi$ holds automatically when considering being at the \emph{last position} of the string (i.e., let $\pi$ be a finite trace and $\sig{len}(\pi)$ returns the length of the trace. Then $\pi, m \models \ocircle_{\text{w}} \phi\;\;\text{iff}\;\; \text{(i)}\; m < \sig{len}(\pi)-1 $ and $\pi, m+1 \models \phi$, or \; \text{(ii)}\; $m = \sig{len}(\pi) -1$). This is based on the belief that if we extend the trace by~$1$, \emph{it is possible} that the next symbol can satisfy~$\phi$. This leads to the following modified definition.

\begin{definition}[Finite sequence fairness by $\vec{\beta}$-bounded repeated appearance]\label{def:bounded.repeated.fair.finite.version}
    Let $\sig{cgf}_1$ and $\sig{cgf}_2$ be two concept grouping functions, and let~$\langle img_i \rangle  \defeq img_1\cdot img_2 \cdots \in \mathcal{I}^{*}$ be the finite image sequence. Then $\langle img_i \rangle$ is \textbf{fair with $\vec{\beta}$-bounded repeated appearance}  for concept group~$2$ \textbf{conditional to} $\sig{cgf}_1$ evaluated to~$cg$, abbreviated as $\langle\frac{\sig{cgf}_2}{\sig{cgf}_1 \Leftarrow  cg}\rangle$ $\square\lozenge_{\leq \beta}$-fair, if given $\langle img'_i \rangle $ defined by Eq.~\eqref{eq:removal}, (i) $\sig{len}(\langle img'_i \rangle)> \beta$, and (ii) the condition characterized in Eq.~\eqref{eq:bounded.repeated.fair.finite.version} holds.
    
\begin{equation}\label{eq:bounded.repeated.fair.finite.version}
\begin{split}
\forall k \in [1 \cdots \sig{CG}_2]: 
\exists m: 1 \leq m \leq \beta_k \wedge \sig{cgf}_2(img'_m) = k \\
\wedge \\
\forall m_1 \geq 1: (\sig{cgf}_2(img'_{m_1}) = k \rightarrow \\
((\exists m_2:  m_1 < m_2 \leq m_1 + \beta_k:  \sig{cgf}_2(img_{m_2}) = k)\\
\vee \\
m_1 + \beta_k > \sig{len}(\langle img'_i \rangle)
))
\end{split}
\end{equation}

\end{definition}

Comparing Eq.~\eqref{eq:bounded.repeated.fair} and Eq.~\eqref{eq:bounded.repeated.fair.finite.version}, the difference in the finite version lies in the disjunction ($\vee$) of condition $(m_1 + \beta_k > \sig{len}(\langle img'_i \rangle))$, reflecting that if the occurrence $m_1$ is close to the end of the image sequence, it is impossible to see~$\beta_k$ subsequent images. Similar to the semantic of $\ocircle_{\text{w}}$, we opportunistically consider it can occur if the image sequence is prolonged.

\begin{figure}[t]
\centerline{\includegraphics[width=1.0\columnwidth]{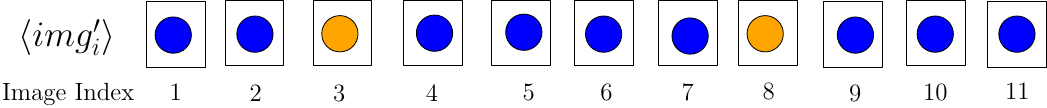}}
\caption{Example illustrating the concept of $\beta$-bounded repeated appearance in a finite sequence of~$11$ images after removal} \label{fig:repeat.finite.sequence}

\end{figure}

\begin{example}
    Consider the finite image sequence as illustrated in Fig.~\ref{fig:repeat.finite.sequence}, where let $\sig{cgf}_{\text{shape}}$ maps to $\{0, 1  \text{(circle)}, 2 \text{(square)}\}$ and 
    $\sig{cgf}_{\text{color}}$ maps to $\{0, 1 \text{(blue)}, 2 \text{(yellow)}\}$. Then the image sequence in Fig.~\ref{fig:repeat.finite.sequence} is $\langle\frac{\sig{cgf}_{\text{color}}}{\sig{cgf}_{\text{shape}} \Leftarrow  1}\rangle$ $\square\lozenge_{\leq (6,6)}$-fair. This is because the image sequence length is larger than~$6$, and for every yellow occurrence, the next occurrence occurs within a distance of~$6$. Note that for the final occurrence of ``yellow'' that makes $m_1=8$ in Eq.~\eqref{eq:bounded.repeated.fair.finite.version}, the second term in the disjunction holds as $m_1 + \beta_2 = 8 + 6 = 14 > \sig{len}(\langle img'_i \rangle) = 11$. 
\end{example}

\end{document}